  \providecommand\BibTeX{{%
    \normalfont B\kern-0.5em{\scshape i\kern-0.25em b}\kern-0.8em\TeX}}}
\newtheorem{theorem}{Theorem}
\begin{document}

\title{Modeling Heterogeneous Statistical Patterns in High-dimensional Data by Adversarial Distributions: An Unsupervised Generative Framework}

\author{Han Zhang$^1$, Wenhao Zheng$^3$, Charley Chen$^1$, Kevin Gao$^1$, Yao Hu$^3$, Ling Huang$^2$ and Wei Xu$^1$}
\affiliation{%
  \institution{$^1$Tsinghua University, Beijing, China \\ $^2$AHI Fintech, China,
       $^3$Youku Cognitive and Intelligent Lab, Alibaba Group}
  }
\email{ {han-zhan17, ctj2015, kevingao96}@mails.tsinghua.edu.cn,  {zwh149850, yaohu}@alibaba-inc.com,huang.ling@gmail.com,weixu@tsinghua.edu.cn}

\renewcommand{\shortauthors}{Zhang and Zheng, et al.}
\renewcommand{\shorttitle}{Modeling Heterogeneous Statistical Patterns In High-dimensional Data By Adversarial Distributions}

\begin{abstract}
Since the label collecting is prohibitive and time-consuming, unsupervised methods are preferred in applications such as fraud detection.
Meanwhile, such applications usually require modeling the intrinsic clusters in high-dimensional data, which usually displays heterogeneous statistical patterns as the patterns of different clusters may appear in different dimensions.
Existing methods propose to model the data clusters on selected dimensions, yet globally omitting any dimension may damage the pattern of certain clusters.
To address the above issues, we propose a novel unsupervised generative framework called FIRD, which utilizes adversarial distributions to fit and disentangle the heterogeneous statistical patterns.
When applying to discrete spaces, FIRD effectively distinguishes the synchronized fraudsters from normal users.
Besides, FIRD also provides superior performance on anomaly detection datasets compared with SOTA anomaly detection methods (over 5\% average AUC improvement).
The significant experiment results on various datasets verify that the proposed method can better model the heterogeneous statistical patterns in high-dimensional data and benefit downstream applications.

\end{abstract}

\begin{CCSXML}
<ccs2012>
<concept>
<concept_id>10002950.10003648.10003662.10003663</concept_id>
<concept_desc>Mathematics of computing~Maximum likelihood estimation</concept_desc>
<concept_significance>500</concept_significance>
</concept>
<concept>
<concept_id>10002950.10003648.10003671</concept_id>
<concept_desc>Mathematics of computing~Probabilistic algorithms</concept_desc>
<concept_significance>500</concept_significance>
</concept>
<concept>
<concept_id>10002950.10003648</concept_id>
<concept_desc>Mathematics of computing~Probability and statistics</concept_desc>
<concept_significance>300</concept_significance>
</concept>
</ccs2012>
\end{CCSXML}

\ccsdesc[500]{Mathematics of computing~Maximum likelihood estimation}
\ccsdesc[500]{Mathematics of computing~Probabilistic algorithms}
\ccsdesc[300]{Mathematics of computing~Probability and statistics}

\keywords{unsupervised learning, adversarial distributions, heterogeneous statistical patterns, high-dimensional data, prior knowledge}

\maketitle
\section{Introduction}
Human-annotated labels are widely adopted to supervise machine learning.
Yet, large-scale labeled datasets are usually prohibitive and time-consuming to obtain~\cite{deng2009imagenet,socher2013recursive,lin2014microsoft,rajpurkar2016squad,wang2018glue}, so the collected labels may be obsolete for training.
For example, online platforms label users as fraudsters after detecting certain malicious behaviors.
However, once the fraudsters are exposed,  they tend to change their strategies of committing fraud so that the platforms cannot utilize these labels to identify fraudsters in the future.
As such, unsupervised methods are usually preferred in such applications to perform real-time pattern recognition.

Applications such as fraud detection usually require modeling the intrinsic clusters in high-dimensional data. 
Such data usually displays \textit{heterogeneous statistical patterns} as the patterns of different clusters may appear in different dimensions.
As an example, consider the registration log data in an online platform which contains dozens of dimensions in 4 types: user information (name, ID card number, gender, age, phone number), device information (MAC address, OS type, manufacturer), network information (IP address, channel, browser) and behavior information (timestamp, time elapsed during registration).
The fraudsters usually display abnormal \textit{synchronized behaviors} in specific dimensions due to similar control scripts or resource sharing, while the normal users are randomly distributed~\cite{palshikar2002hidden,raj2011analysis}.
For instance, one fraud group may utilize the Android simulator to implement bulk registration, which in turn leads to similar device and behavior information.
Another fraud group may hire people to register the accounts manually.
As they possess only a few available phone numbers and network proxies, their network and user identity signatures will be similar.

The heterogeneous statistical patterns are challenging to identify, as such fraudsters will try their best to get disguised as normal users.
In practice the fraudsters are highly indistinguishable from the normal users concerning all the dimensions due to two reasons: 1) many more dimensions are recorded in real-world scenarios (e.g., more detailed personal information and historical behaviors from other platforms), and 2) normal users may occasionally share feature values.
From a global perspective, traditional similarity-based clustering methods may result in too many false positives as they are easily affected by noisy normal users.
Many previous works propose to omit uninformative features and cluster the dataset on the remaining features~\cite{raftery2006variable,jovic2015review,alelyani2013feature}.
As the local synchronization of different fraud groups is usually distinct (e.g., device/time information and network/user information), it is often hard to tell which dimension to omit.
In fact, the fraud patterns evolve as fraudsters learn which dimensions are ignored and thus find new ways to avoid detection.
Therefore, to effectively recognize the heterogeneous statistical patterns, methods should model both the global distribution of all data clusters and the local patterns for each cluster.

Modeling such heterogeneous statistical patterns is challenging for two reasons.
First, globally modeling the data clusters requires evaluating the joint distribution of all dimensions, which suffers from the curse of dimensionality.
For example, the number of possible combinations of (gender, age, phone number, \dots ) is the product of each dimensions' possible value counts.
Therefore, the number of parameters required to describe such joint distribution grows exponentially fast with the dimensionality.
Second, recognizing the local cluster patterns requires modeling the mixture distribution of the different fraud groups and random normal users, which is usually distinct among the dimensions as the dimensions' sample spaces (i.e., the set of all possible values) are different.
For example, the gender and age of a user take values respectively from a binary-value set \{\textit{male}, \textit{female}\} and a ten-value set \{0\textasciitilde 10, 10\textasciitilde 20, \dots , 90\textasciitilde 100\}, so we need 1 and 9 parameters to describe the distributions on these two dimensions correspondingly.
As a result, conventional mixture methods such as Gaussian mixture model (GMM) are inapplicable to such data, since they assume that all dimensions of the data share the same sample space, e.g., $\mathbb{R}$ for GMM.

To address the above issues, we propose a novel generative framework called FIRD to model the heterogeneous statistical patterns in unlabelled datasets.
\textbf{FIRD} effectively models both the global and local patterns based on the \textbf{F}eature \textbf{I}ndependence assumption and the adve\textbf{R}sarial \textbf{D}istributions.
FIRD assumes the features are conditionally independent within the same group to model the dimensions with different sample spaces and relieve the curse of dimensionality.
It then utilizes the {\it adversarial distributions} (a group of distributions competing in generating observations) to fit and disentangle the complex data distributions in each dimension, which in turn brings the model stronger interpretability.
Specifically, FIRD identifies the synchronization of the fraud groups as well as the randomly distributed normal users in discrete space by using a pair of adversarial multinomial distributions in each dimension.
It is also worth emphasizing that FIRD is not limited to fraud detection tasks.
We demonstrate in our experiments that modeling the heterogeneous statistical patterns are also beneficial to anomaly detection tasks (over 5\% average AUC improvement over SOTA methods).
We expect FIRD to be effective in other applications that model the patters other than the synchronization and randomness by adopting appropriate adversarial distributions.
The learned probabilistic representations also support probabilistic reasoning based on prior information.
The major contributions of this paper are distinguished as follows:
\begin{itemize}
  \item We present a novel generative framework FIRD, which adopts adversarial distributions to capture heterogeneous statistical patterns in unlabeled datasets.
  \item In discrete spaces, FIRD provides interpretable fraud detection and anomaly detection results on various datasets.
  \item The effectiveness of FIRD indicates modeling the heterogeneous statistical patterns benefits various downstream tasks.
\end{itemize}

The remaining of the paper is arranged as follows.
We review the related works in Section \ref{sec:related_work}.
We introduce the proposed framework FIRD in Section~\ref{sec:method}.
We demonstrate the experimental results in Section~\ref{sec:experiments}.
We conclude the paper in Section \ref{sec:conclusion}.

\section{Background}
\label{sec:related_work}
In this section, we review the approaches related to the method and applications discussed in this paper.
Then we demonstrate the identifiability issue arisen in the existing generative models.

\begin{table*}[!tb]
\begin{tabular}{cl|cl|cl}
\toprule
Symbol & Meaning & Symbol & Meaning & Symbol & Meaning\\
\midrule
$n$ & index of data & $N$ & total number of data samples & $\mathbf{x}_n$ & observed data\\
$g$ & index of clusters & $G$ & total number of clusters & $d_n$ & hidden cluster indicator \\
$m$ & index of features & $M$ & total number of features & $f_{nm}$ & adversarial distribution indicator\\
$i$ & index of feature values & $D_m$ & $m$-th feature dimension & $\mathcal{F}_m$ & $m$-th dimension\\
$p(\cdot)$ & generative model & $p_k(\cdot)$ & $k$-th generative component & $q(\cdot)$ & variational distributions \\
$\pi_g$ & mixture weight of $g$-th cluster& $\mu_{gm}$ & adversarial distribution weight & $\boldsymbol{\alpha}_{gm}$ & synchronized multinomial parameter\\
$\boldsymbol{\beta}_{gm}$ & random multinomial parameter & $\widetilde{\phi}_{ng}$ & responsibility of $g$-th cluster & $\widetilde{\gamma}_{ngm}$ & responsibility of $m$-th feature\\
$\gamma_{ngm}$ & auxiliary variable for E step & $\bar{\gamma}_{ngm}$ & auxiliary variable for E step & $\lambda$ & hyper parameter for  prior regularization\\
\bottomrule
\end{tabular}    
\caption{Symbols in this paper. The bold uppercase represents a matrix, the bold lowercase represents a vector and the regular type represents a scaler. Parameters with hats (e.g., $\hat{\pi}_g$) are the parameters in the last EM iteration.}
\label{tab:symbols}
\end{table*}

\subsection{Related Approaches}
Recognizing the heterogeneous statistical patterns in unlabeled datasets requires modeling both the global distribution of the data clusters and local patterns of each data cluster.
When the local patterns of all data clusters are distributed in the same subset of the features, we may omit other non-informative features and perform clustering on the remaining features~\cite{raftery2006variable,jovic2015review,alelyani2013feature}.
The feature selection algorithms in clustering are basically divided into three categories: the filter models, wrapper models, and hybrid models~\cite{alelyani2013feature}.
The filter models use certain criterion to evaluate the quality of the features and then cluster the data w.r.t. high-quality features~\cite{zhao2007spectral}.
The wrapper models enumerate all feature combinations and utilize a specific clustering method to evaluate each combination~\cite{kim2002evolutionary}.
Unlike the other two models, the hybrid models simultaneously select useful features and cluster the data points.
For example, feature saliency models attempt to fit each feature with either a Gaussian mixture model (GMM) or a global Gaussian~\cite{law2003feature,tadesse2005bayesian,constantinopoulos2006bayesian,silvestre2015feature,white2016bayesian}.
In this way, features fitted by the GMM are effective for clustering, while other features fitted by the global Gaussian are discarded.
However, such methods are unable to model the heterogeneous statistical patterns among the dimensions for three reasons.
First, since the local patterns of the clusters may involve different dimensions, no dimension should be omitted globally.
Second, distributions like multivariate Gaussian assume the dimensions share the same sample space, such as $\mathbb{R}$, which is usually not true in practice.
Finally, such model-based methods suffer from an identifiability problem when applying to discrete spaces~\cite{silvestre2015feature,white2016bayesian}, as they fail to disentangle the clusters with a mixture of multinomials (Section~\ref{sec:identifiability}).

The synchronized fraudsters described above can be seen as dense blocks (i.e., groups of frequently shared feature values) in the dataset.
Related works then detect fraud groups by searching for the dense blocks from the dataset filled with randomly distributed normal users.
\cite{jiang2015crossspot} measures a block's density by its likelihood under a Poisson assumption and greedily searches for blocks to optimize it.
\cite{shin2016mzoom} proposes a greedy optimization framework to optimize a given density measure. 
D-Cube \cite{shin2017dcube}, the successor to \cite{shin2016mzoom}, has accelerated computation and is thus applicable to much larger datasets.
However, these greedy search methods focus only on the fraud patterns, so the changes in the patterns of the normal users can easily affect these models.
Specifically, the precision of these methods declines significantly with the growing number of normal users, as the randomly distributed normal users are noise that can significantly interfere with the searching process (Section \ref{sec:fraud_comparison}).

Modeling the heterogeneous statistical patterns can also benefit anomaly detection methods.
Anomaly detection methods assume that the majority of a dataset is closely distributed on several specific manifolds~\cite{aggarwal2013outlier}.
Data points distant from these manifolds are thus identified as anomalies.
With this manifold assumption, one can spot the anomalies by linear methods~\cite{shyu2003novel,dufrenois2016one}, proximity-based methods~\cite{he2003discovering,goldstein2012histogram}, and outlier ensembles ~\cite{lazarevic2005feature,liu2008isolation,zimek2013subsampling}.
In high-dimensional datasets, however, only when considering specific dimensions, the anomalies are distant from the manifolds.
To select informative features in the outlier detection problem, many previous works propose the ensembles of the base detectors~\cite{lazarevic2005feature,liu2008isolation,zimek2013subsampling}.
The basic thinking is resampling the data points and the features so that the model takes merit from different feature combinations and reduces the fraction of anomalies.
However, such resampling searches through all feature combinations, which is exponential to the number of dimensions.
Besides, when the resampling fails to generate the right feature combination, the detection performance declines significantly.

\subsection{Identifiability Problem in Existing Generative Methods}
\label{sec:identifiability}
Two model-based feature selection methods are proposed for clustering discrete data~\cite{silvestre2015feature,white2016bayesian}.
They extend the seminal works~\cite{law2003feature,tadesse2005bayesian} from continuous to the discrete case by replacing Gaussian distributions with multinomial distributions.
Here we show the identifiability issue of such models in discrete spaces.

To avoid confusion, we use the same denotation as the~\cite{silvestre2015feature}.
The likelihood is
\begin{equation}
    \mathcal L(\boldsymbol\Theta; \mathbf{y}_i) = \prod_{i=1}^N\sum_{k=1}^K\alpha_k\prod_{l=1}^L[\rho_lp(y_{il}|\theta_{lk}) + (1-\rho_l)q(y_{il}|\theta_l)],
\end{equation}
where $\mathbf{y}$ is the observation. 
Parameters $\boldsymbol\alpha$ and $\boldsymbol{\rho}$ are the mixture weights and indicator parameters of feature selection, respectively.
They use $\theta_{lk}$ to represent the group-specific parameters and $\theta_{l}$ to represent the global parameters.
Here the distributions $p$ and $q$ are both multinomials.
We can rewrite the probability as
\begin{equation}
    p(y_{il}=Y_{lm}) = \sum_{k=1}^K\alpha_k[\rho_l\theta_{lkm} + (1-\rho_l)\theta_{lm}],
\end{equation}
where $\theta_{lkm}$ and $\theta_{lm}$ are the probability mass of $Y_{lm}$.
Then we can write the likelihood in a different way as
\begin{equation}
    \mathcal L(\boldsymbol\Theta; \mathbf{y}_i) = \prod_{i=1}^N\prod_{l=1}^L\prod_{m=1}^{M_l}\left\{\sum_{k=1}^K\alpha_k[\rho_l\theta_{lkm} + (1-\rho_l)\theta_{lm}]\right\}^{\mathbf{1}_{\{y_{il}=Y_{lm}\}}},
\end{equation}
where the function $\mathbf{1}_{\{\cdot\}}$ is the indicator function.
If we have already learned the optimal $\boldsymbol{\rho}^\star$, and denoting
\begin{displaymath}
    \tilde{p}_{lm} = \sum_{k=1}^K\alpha_k[\rho^\star_l\theta_{lkm} + (1-\rho^\star_l)\theta_{lm}],
\end{displaymath}
we see that optimizing such mixture of multinomials is equivalent to optimizing a single multinomial with parameter $\tilde{\mathbf{p}}_l = [\tilde{p}_{l1} \dots \tilde{p}_{lm} \dots \tilde{p}_{lM}]$ for each feature.
If we introduce information criteria to determine $G$ automatically as they did in the original method, $G$ will always be 1.
The existing methods introduce informative priors merely to the mixture weights and feature selection indicator parameters, which does not solve the identifiability issue.

\section{Our Method}
\label{sec:method}

In this section, we first present a novel unsupervised generative learning framework called FIRD and describe how it captures the heterogeneous statistical patterns using the adversarial distributions.
Then we apply the framework to discrete space to model two specific patterns, namely synchronization, and randomness, for applications such as fraud detection and anomaly detection.
For better readability, Table~\ref{tab:symbols} summarize the meanings of the symbols in this paper.

\subsection{FIRD: A Novel Learning Framework}
\label{sec:framework}
In this section, we present a novel unsupervised generative learning framework called FIRD.
We first introduce the adversarial distributions and discuss the identifiability issue caused by over-flexible variational distributions.
Then we propose to solve this identifiability problem using prior knowledge and provide theoretical guarantees.

For each observation in $\mathcal{D}=\{\boldsymbol{x}_n\}_{n=1}^N$, we want to learn its associated latent semantic variable $d_n$.
To characterize the different sample space and relieve the curse of dimensionality, we assume the data features are independent given $d_n$, i.e. $p(\mathbf{x}_n|d_n)=\prod_{m=1}^Mp(x_{nm}|d_n)$.
We assume the observations display up to $K$ distinct statistical patterns in each dimension.
To characterize and balance these patterns, we introduce the {\it adversarial distributions} $\left\{p_k(x_{nm}|d_n)\right\}_{k=1}^K$, such that given $d_n$, the distribution of observation $x_{nm}$ is a mixture of the adversarial distributions, i.e. $p(x_{nm}|d_n)=\sum_{k=1}^K\mu_{mk}p_k(x_{nm}|d_n)$.
Here $\mu_{mk}$ determines the responsibility of each adversarial distribution component $p_k(x_{nm}|d_n)$ for generating $x_{nm}$.
As $\boldsymbol{\mu}_m$ lies in a probability simplex, increasing the responsibility $\mu_{mk}$ of $p_k(x_{nm}|d_n)$ will reduce that of other adversarial distribution components.
In this way, the adversarial distributions compete in generating the observations by fitting the intricate data patterns in each dimension.
Under such assumption, the data generation process can be described by the following two steps:
\begin{enumerate}
  \item Generate $d_n$ from the semantic distribution $p(d_n)$.
  \item For each feature $m$, choose an adversarial component $k$ w.r.t. $\mu_{km}$, and generate $x_{nm}$ from $p_k(x_{nm}|d_n)$.
\end{enumerate}
Since $p(d_n)$ and $p_k(x_{nm}|d_n)$ are unknown, we cannot directly evaluate the likelihood of such generative models.
We then turn to optimizing the likelihood lower bound using variational methods.
Denoting the parameters of the learner as $\pi$ and $\theta$, we have the evidence lower bound of the log-likelihood (ELBO):
\begin{equation}
\label{eq:elbo}
\begin{aligned}
&\log\mathcal{L}(\theta, \mathbf{d};\mathbf{x})=\sum_{n=1}^N\log \sum_{d_n}p(d_n|\pi)\prod_{m=1}^M\sum_{k=1}^K\mu_{mk} p_k(x_{nm}|d_n,\theta)\\
&\geq \sum_{n=1}^N\sum_{d_n}q(d_n)\left\{\log\frac{p(d_n, \pi)}{q(d_n)}+\sum_{m=1}^M\sum_{k=1}^K\tilde{\mu}_{nmk}\log \frac{\mu_{mk} p_k(x_{nm}|d_n, \theta)}{\tilde{\mu}_{nmk}}\right\},
\end{aligned}
\end{equation}
where we introduce variational distribution $q(d_n)$ to approximate the posterior $p(d_n|\mathbf{x}_n)$ and apply Jensen's inequality with auxiliary variables $\tilde{\mu}_{nmk}$ ($\sum_{k}\tilde{\mu}_{nmk}=1, \tilde{\mu}_{nmk}\geq 0$).

As the ELBO is concave, we can use the EM algorithm to iteratively approximate the latent semantic distribution $p(x_n|d_n)$ and estimate the parameters.
However, if the adversarial distributions are too flexible, we may fail to disentangle the complex data distribution as expected.
An extreme case is that if $p_k(x_{nm}|d_n, \theta)$ is able to fit the data patterns perfectly, i.e. $\inf_{\theta}D_{KL}\left(p_k(x_{nm}|d_n, \theta)\middle\Vert p^\star(x_{nm}|d_n)\right)=0$, it can dominate the adversarial distributions with $\mu_{mk}=1$ and $\mu_{mj}=0$ for all $j\neq k$, where we denote the ground truth distributions as $p^\star(\cdot)$.
To address this problem, we introduce prior information into these adversarial distributions such that each adversarial component can fit one specific pattern much better than other patterns.
We have the following theorem to guarantee the optimality and uniqueness of the variational approximation.

\begin{theorem}
\label{thm:unique_optimizer}
Suppose that with proper priors, $\forall k \in \{1,\dots,K\}$ there exists a unique and distinct index $k'\in \{1,\dots,K\}$ and some constant $C>0$ such that
\begin{displaymath}
\begin{aligned}
    \inf_{\theta}D_{KL}\left(p^\star_{k'}(x_{nm}|d_n)\middle\Vert p_j(x_{nm}|d_n, \theta)\right) &> C, \\
    \inf_{\theta}D_{KL}\left(p^\star_{k'}(x_{nm}|d_n)\middle\Vert p_k(x_{nm}|d_n, \theta)\right) &= 0
\end{aligned}
\end{displaymath}
for all $j\neq k$, where $D_{KL}(\cdot\Vert\cdot)$ is the KL divergence.
Then we have the unique optimal solution:
\begin{displaymath}
\begin{aligned}
q(d_n)=p^\star(x_{nm}&|d_n),~p_k(x_{nm}|d_n, \theta) = p_k^\star(x_{nm}|d_n)\\
\tilde{\mu}_{nmk}&=\frac{\mu_{mk}p_k^\star(x_{nm}|d_n)}{\sum_{k=1}^K\mu_{mk}p_k^\star(x_{nm}|d_n)}
\end{aligned}
\end{displaymath}
for the ELBO in Eq.~\eqref{eq:elbo}.
\end{theorem}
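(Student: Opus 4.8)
The plan is to show the stated solution is the unique global maximizer of the ELBO in two stages: (i) prove the ELBO attains its ceiling $\log\mathcal L$ exactly at the candidate $q$ and $\tilde\mu$, which reduces the problem to maximizing $\log\mathcal L(\theta)$ over the component parameters; and (ii) use the separation hypotheses to force $\theta$ to the truth and to rule out all competing configurations.

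First I would make the two gaps in Eq.~\eqref{eq:elbo} explicit. The ELBO arises from $\log\mathcal L$ by two successive applications of Jensen's inequality: one on the outer sum over $d_n$ (mediated by $q(d_n)$), and one on each inner mixture $\sum_k\mu_{mk}p_k(x_{nm}|d_n,\theta)$ (mediated by $\tilde\mu_{nmk}$). Standard variational algebra rewrites the difference as
\begin{displaymath}
\log\mathcal L - \text{ELBO} = \sum_{n} D_{KL}\!\left(q(d_n)\,\middle\Vert\, p(d_n|\mathbf x_n)\right) + \sum_{n}\sum_{d_n} q(d_n)\sum_{m} D_{KL}\!\left(\tilde{\boldsymbol\mu}_{nm}\,\middle\Vert\, \boldsymbol r_{nm}\right),
\end{displaymath}
where $\boldsymbol r_{nm}$ is the exact posterior over the adversarial index, $r_{nmk}=\mu_{mk}p_k(x_{nm}|d_n,\theta)/\sum_{k}\mu_{mk}p_k(x_{nm}|d_n,\theta)$. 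Both summands are nonnegative, so $\text{ELBO}\le\log\mathcal L$ with equality \emph{iff} $q(d_n)$ equals the true posterior and $\tilde\mu_{nmk}=r_{nmk}$. Substituting the candidate $q$ and $\tilde\mu_{nmk}$ from the statement makes both KL terms vanish, so the ELBO reaches $\log\mathcal L(\theta)$ at this point, and maximizing the ELBO reduces to maximizing $\log\mathcal L(\theta)$ over $\theta$.

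Next I would characterize the $\theta$-maximizer. By Gibbs' inequality, $\log\mathcal L(\theta)$ is maximized exactly when each per-dimension model mixture $\sum_k\mu_{mk}p_k(x_{nm}|d_n,\theta)$ coincides with the true mixture $\sum_k\mu_{mk}p^\star_k(x_{nm}|d_n)$; since the weights $\mu_{mk}$ are shared, the assignment $p_k=p^\star_k$ clearly achieves this. The delicate point — exactly the failure flagged in Section~\ref{sec:identifiability} — is that matching the mixture \emph{marginal} does not by itself pin down the individual components, so permuted or blended assignments could in principle reproduce the same marginal. This is where the separation hypotheses must do the work.

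The main obstacle, and the final step, is converting the KL conditions into uniqueness. The hypothesis supplies a bijection $k\mapsto k'$ such that $p_k(\cdot,\theta)$ can be driven to zero KL from the designated true component $p^\star_{k'}$, while every other learnable component $p_j$, $j\neq k$, stays uniformly bounded away from $p^\star_{k'}$ by the margin $C>0$. Hence no component other than $p_k$ can carry the mass of $p^\star_{k'}$; because the correspondence is a bijection, the components cannot be permuted or blended without leaving some true component unmatched and thereby strictly enlarging the KL gap. The only marginal-matching configuration compatible with the margins is therefore $p_k(x_{nm}|d_n,\theta)=p^\star_k(x_{nm}|d_n)$ for all $k$, which in turn fixes $q(d_n)$ to the true posterior and $\tilde\mu_{nmk}$ to the stated responsibilities. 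The subtlest part of this argument is passing from ``the infimum KL equals $0$'' to ``the optimizer \emph{equals} the true component,'' which I expect to require that the infimum be attained (compactness of the parameter class, or closedness of the model family) so that the strict $>C$ separation genuinely excludes every competitor rather than merely most of them.
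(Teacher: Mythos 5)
Your proposal takes a genuinely different route from the paper's proof, and its first stage is actually more complete than the paper's. The paper (Appendix~\ref{sec:theorem_proof}) dispatches optimality in one sentence ("the ELBO is concave, so the stated solution is optimal") and never derives the claimed forms of $q$ and $\tilde\mu$ at all — it ends with "using the EM algorithm gives the estimation of other parameters." Its uniqueness argument is an \emph{exchange} argument: from the KL hypotheses it extracts, for $j\neq k$, the inequality $\mathbb{E}_{x\sim p^\star_{k'}}[\log p_k(x_{nm}|d_n,\theta)] \geq \mathbb{E}_{x\sim p^\star_{k'}}[\log p_j(x_{nm}|d_n,\theta)]$ (the displayed version drops the logarithms, but that is what the hypotheses yield), multiplies by $q(d_n)\tilde\mu_{nmk}$, sums, and compares $\lim_{N\to\infty}\frac{1}{N}\mathrm{ELBO}$ under the assignment ``$p_k$ fits $p^\star_{k'}$'' against ``$p_j$ fits $p^\star_{k'}$.'' Your stage (i) — the exact identity $\log\mathcal{L}-\mathrm{ELBO}=\sum_n D_{KL}(q(d_n)\Vert p(d_n|\mathbf{x}_n)) + \sum_{n,d_n,m}q(d_n)\,D_{KL}(\tilde{\boldsymbol\mu}_{nm}\Vert\boldsymbol{r}_{nm})$ — is the standard EM tightness decomposition, and it buys exactly what the paper omits: the maximizing $q$ and $\tilde\mu$ are \emph{forced} to be the posterior and the responsibilities, which are precisely the two formulas asserted in the theorem. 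That is a structural improvement.

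Two gaps remain, one shared with the paper and one specific to your write-up. First, your Gibbs step ("$\log\mathcal{L}(\theta)$ is maximized exactly when the model mixture coincides with the true mixture") is false at finite $N$: the sample MLE is not the ground truth. Gibbs' inequality applies to the population cross-entropy $\mathbb{E}_{\mathbf{x}\sim p^\star}[\log p_\theta(\mathbf{x})]$, i.e., you need the same $N\to\infty$ normalization that the paper inserts explicitly (and which, strictly speaking, is absent from the theorem statement itself). Second, and more substantively, your claim that the bijection-plus-margin hypothesis excludes \emph{blended} solutions does not follow. The objective sees the components only through the mixture $\sum_k\mu_{mk}p_k$; a blend such as $p_1(\theta)=p^\star_1+t\delta$ and $p_2(\theta)=p^\star_2-(\mu_{m1}/\mu_{m2})\,t\delta$, with $t$ small enough that every margin condition still holds, reproduces the true mixture exactly and therefore attains the same likelihood and the same ELBO. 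Your phrase "thereby strictly enlarging the KL gap" refers to component-level KL, which is not a term in the objective, so it cannot yield the strict decrease your uniqueness argument needs. To be fair, the paper's exchange argument has the same hole — it only orders pure assignments and is silent on blends — but your proposal explicitly claims to close this hole and does not; closing it genuinely requires extra structure on the component families (e.g., each family is a singleton in the closure, or attainment plus a local identifiability condition), which is the caveat you correctly raise in your final sentence.
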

\begin{proof}
Please see Appendix~\ref{sec:theorem_proof}.
\end{proof}

To address the specific modeling problem in applications such as fraud detection, we apply FIRD to the discrete spaces.

\subsection{Discrete Space Application}
\label{sec:fird}
In this section, we apply FIRD to discrete spaces for applications such as fraud detection.
We first illustrate the model and optimization techniques.
After that, we describe how FIRD reduces the noise in the dataset by filtering outliers.
Then we introduce how FIRD makes inference based on the prior knowledge and the learned probability representations.
Finally, we discuss the complexity, initialization, and hyperparameters.

\subsubsection{Model}
The FIRD model in discrete space is a finite mixture model with adversarial multinomial distribution pairs.
To better illustrate the generation process, we plot the plate representation of the FIRD in discrete spaces in \figurename~\ref{fig:GraphicModel}.

\begin{figure}[!tb]
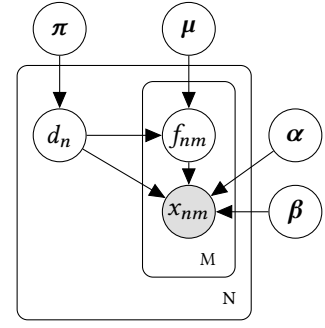

  \centering
  \tikz{ %
    \node[latent] (d) {$d_n$} ; %
    \node[latent, right=of d] (f) {$f_{nm}$} ; %
    \node[obs, below=.3cm of f] (x) {$x_{nm}$} ; %
    \node[latent, above=.7cm of d] (pi) {$\boldsymbol\pi$};
    \node[latent, above=.7cm of f] (mu) {$\boldsymbol\mu$};
    \node[latent, right=.7of f] (alpha) {$\boldsymbol\alpha$};
    \node[latent, right=.7of x] (beta) {$\boldsymbol\beta$};
    \plate[inner sep=0.25cm, yshift=.1cm] {plate1} {(f) (x)} {M}; %
    \plate[inner sep=0.2cm] {plate2} {(d) (plate1)} {N}; %
    \edge {d} {f} ; %
    \edge {d, f, alpha, beta} {x} ; %
    \edge {pi} {d};
    \edge {mu} {f};
  }
 \caption{The plate representation of FIRD in discrete space.
 The plates with subscript $M$ and $N$ indicate respectively the $M$ independent features and $N$ i.i.d. data points.
 The parameter $\boldsymbol\pi$ is the mixture weight.
 $\boldsymbol{\mu}$ balances the adversarial distribution pairs.
 The synchronization and randomness are captured by the adversarial distribution pairs, whose parameters are $\boldsymbol{\alpha}$ and $\boldsymbol{\beta}$, respectively.}
 \label{fig:GraphicModel}
\end{figure}

In discrete spaces, the observations $\mathcal{D}=\{\mathbf{x}_n\}_{n=1}^N$ are $M$-dimensional vectors, where for each feature $\mathcal{F}_m$, $1\leq m \leq M$, $x_{nm}$ takes its value from the set $\{X_{mi}\}_{i=1}^{D_m}$.
The latent semantic variable $d_n$ takes discrete values from $\{1,\dots,G\}$ governed by $p(d_n) = Mult(\boldsymbol{\pi})$, where we use $Mult(\cdot)$ for abbreviation of a multinomial distribution.
We assume that given $d_n$, each feature $\mathcal{F}_m$ independently generates the feature value from an adversarial multinomial distribution pair that respectively captures synchronization and randomness.
Suppose the synchronization-aware components of the adversarial distribution pairs are multinomials controlled by $\boldsymbol\alpha$, and the randomness-aware components are multinomials controlled by $\boldsymbol\beta$. 
We then have the following generation process for an observation $\mathbf{x}_n$:
\begin{enumerate}
\item Choose the semantic variable $d_n\sim$ $Mult(\boldsymbol\pi)$.
\item For each feature $\mathcal{F}_m$:
\begin{enumerate}
\item Choose the indicator variable $f_{nm}\sim$ $Bernoulli(\mu_{d_nm})$.
\item If $f_{nm}=1$, choose the observed value $x_{nm}\sim$ $Mult(\boldsymbol\alpha_{d_nm})$;
\item If $f_{nm}=0$, choose the observed value $x_{nm}\sim$ $Mult(\boldsymbol\beta_{d_nm})$.
\end{enumerate}
\end{enumerate}

As described in section~\ref{sec:framework}, we need to introduce priors into the adversarial distributions so that each component of the adversarial distributions fits one specific pattern.
In discrete spaces, to generate synchronized observations, the probability mass of the distribution $Mult(\boldsymbol\alpha)$ should distribute on a small subset of the feature values.
In other words, most entries in $\boldsymbol{\alpha}$ should be zero so that sampling from $Mult(\boldsymbol\alpha)$ leads to several possible outputs.
Similarly, to model the randomness $Mult(\boldsymbol\beta)$ should approximate the uniform distribution, so that sampling from $Mult(\boldsymbol\beta)$ produces a wide variety of values.
We can achieve such randomness- and synchronization-awareness by introducing Dirichlet-like priors to promote sparse $\boldsymbol\alpha_{d_nm}$ and smooth $\boldsymbol\beta_{d_nm}$.
Moreover, according to the traditional information criterions, we can promote sparse $\boldsymbol{\pi}$ to automatically choose a proper $G$ for the dimension of $d_n$ (see for example~\cite{barron1998minimum}).
\begin{equation}
\begin{aligned}
\label{eq:objective}
    &~~~~\log\mathcal{L}(\boldsymbol\pi,\boldsymbol\mu,\boldsymbol\alpha,\boldsymbol\beta; \mathcal D, \boldsymbol\lambda^{(1)}, \boldsymbol\lambda^{(2)}) \\
    &= \sum_{n=1}^N\log\bigg\{\sum_{d_n, \mathbf{f}_n}
    p(\mathbf{x}_{n}|\mathbf{f}_{n}, d_n, \boldsymbol\alpha, \boldsymbol\beta)p(\mathbf{f}_{n}|d_n, \boldsymbol\mu)p(d_n|\boldsymbol\pi)
    \bigg\}\\
    &~~~~~-\sum_{g=1}^G \lambda_{g}^{(1)}\log\pi_g-\sum_{g=1}^G\sum_{m=1}^M\sum_{i=1}^{D_m}\lambda_{gmi}^{(2)}\left(\log\alpha_{gmi}-\log\beta_{gmi}\right).
\end{aligned}
\end{equation}
We introduce the hyper parameter $\boldsymbol{\lambda}$ to control the regularization.
Note that the regularizers for $\boldsymbol{\pi}$ and $\boldsymbol{\alpha}$ (last two terms in Eq. \eqref{eq:objective}) are negative to promote sparsity, and that for $\boldsymbol{\beta}$ is positive to promote randomness.
We choose the same weight $\lambda_{gmi}^{(2)}$ for each adversarial distribution pair to ensure they are equally capable of modeling the synchronization and randomness.
We can determine whether $Mult(\boldsymbol{\alpha})$ or $Mult(\boldsymbol{\beta})$ prevails by introducing priors to parameter $\boldsymbol{\mu}$, which makes FIRD fully Bayesian.
However, in this paper, we just adopt a fair competition between the adversarial distributions.

\subsubsection{Optimization}
We then apply the EM algorithm for optimization.
In the E-step, we calculate the posterior distribution of the latent variables given the observations, i.e. $p(d_n, \mathbf{f}_{n}|\mathbf{x}_{n}, \hat{\boldsymbol\theta})$.
To represent this posterior distribution we estimate
\begin{equation}
\label{eq:latent_info}
\begin{aligned}
\widetilde{\phi}_{ng}&=p(d_n=g|\mathbf{x})=\frac{\hat{\pi}_{g}\prod_{m=1}^M\left\{\gamma_{ngm}+\bar{\gamma}_{ngm}\right\}}{\sum_{g'=1}^G\hat{\pi}_{g'}\prod_{m=1}^M\left\{\gamma_{ng'm}+\bar{\gamma}_{ng'm}\right\}},\\
\widetilde{\gamma}_{ngm} &=p(f_{nm}=1|\mathbf{x}, d_n=g) =\frac{\gamma_{ngm}}{\gamma_{ngm}+\bar{\gamma}_{ngm}},
\end{aligned}
\end{equation}
where we denote the parameter estimations from the last iteration with hats and define
\begin{displaymath}
\begin{aligned}
\gamma_{ngm} &= \hat{\mu}_{gm}\prod_{i=1}^{D_m}\hat{\alpha}_{gmi}^{x_{nmi}}, 
\bar{\gamma}_{ngm} = (1-\hat{\mu}_{gm})\prod_{i=1}^{D_m}\hat{\beta}_{gmi}^{x_{nmi}}. \\
\end{aligned}
\end{displaymath}

In the M-step, we find parameters that optimize the expected likelihood.
Since the objective is concave w.r.t. $\boldsymbol{\mu}$ and $\boldsymbol{\beta}$, setting the derivative to zero gives us
\begin{equation}
\label{eq:update_one}
\begin{aligned}
\mu_{gm} &= \frac{\sum_{n=1}^N\widetilde{\gamma}_{ngm}\cdot\widetilde{\phi}_{ng}}{\sum_{n=1}^N\widetilde{\phi}_{ng}}, \\
\beta_{gmi}&= \frac{\lambda_{gmi}^{(2)} + \sum_{n=1}^Nx_{nmi}(1-\widetilde{\gamma}_{ngm})\widetilde{\phi}_{ng}}{D_{m}\lambda_{gmi}^{(2)} + \sum_{n=1}^N(1-\widetilde{\gamma}_{ngm})\widetilde{\phi}_{ng}}.\\
\end{aligned}
\end{equation}
The optimization of $\boldsymbol{\pi}$ and $\boldsymbol{\alpha}$ does not have a closed-form solution, as the objective is no longer concave due to the regularizers.
However, by introducing some small noise to the Dirichlet-like priors, we can calculate the maximum using a numerical method~\cite{larsson2011concave}.
We iteratively update $\boldsymbol{\pi}$ and $\boldsymbol{\alpha}$
\begin{equation}
\label{eq:update_two}
\begin{aligned}
\pi_g &= \frac{\sum_{n=1}^N\widetilde{\phi}_{ng} + \lambda_g^{(1)}\pi_g}{N + \lambda_g^{(1)}/\pi_g},\\
\alpha_{gmi} &= \frac{\sum_{n=1}^Nx_{nmi}\cdot\widetilde{\gamma}_{ngm}\cdot\widetilde{\phi}_{ng} + \lambda_{gmi}^{(2)}\alpha_{gmi}}{\sum_{n=1}^N\widetilde{\gamma}_{ngm}\cdot\widetilde{\phi}_{ng} + \lambda_{gmi}^{(2)}/\alpha_{gmi}},
\end{aligned}
\end{equation}
until convergence.
We present the detailed derivation in the supplementary material\footnote{Supplementary material is available at https://github.com/fingertap/fird.cython/blob/\\master/suppl\_material.pdf.}, and we show the pseudo-code of FIRD in Algorithm~\ref{alg:fird}.

\begin{algorithm}
\caption{FIRD in Discrete Space}
\label{alg:fird}
\begin{algorithmic}[1]
\Require Observations $\mathcal{D}=\{\mathbf{x}_n\}_{n=1}^N$, semantic dimension $G$, regularization weight $\boldsymbol{\lambda}$, precision $\epsilon$.
\Ensure Semantic representations $\widetilde{\boldsymbol\phi}$ and $\widetilde{\boldsymbol{\gamma}}$, semantic distirbution $\boldsymbol{\pi}$, balance parameters $\boldsymbol{\mu}$ of the adversarial distributions, adversarial distribution parameters $\boldsymbol{\alpha}$ and $\boldsymbol{\beta}$.
\For {$g=1$ to $G$, $m=1$ to $M$} \Comment{Initialize parameters.}
    \State $\pi_g\gets 1/G$, $\mu_{gm}\leftarrow 0.5$.
    \State Randomly initialize and normalize $\boldsymbol{\alpha}_{gm}$ and $\boldsymbol{\beta}_{gm}$.
\EndFor
\State $\mathcal{L}\gets -\infty$\Comment{Initialize likelihood.}
\Repeat
\State Evaluate $\mathcal{L}^{new}$ according to Eq.~\eqref{eq:objective}.
\State Calculate $\widetilde{\boldsymbol\phi}$, $\widetilde{\boldsymbol{\gamma}}$ according to Eq.~\eqref{eq:latent_info}.\Comment{E-step.}
\State Update $\boldsymbol{\mu}$ and $\boldsymbol{\beta}$ according to Eq.~\eqref{eq:update_one}.\Comment{M-step.}
\State Update $\boldsymbol{\pi}$ and $\boldsymbol{\alpha}$ according to Eq.~\eqref{eq:update_two}.\Comment{M-step.}
\Until {$\mathcal{L}^{new}-\mathcal{L} < \epsilon$}
\end{algorithmic}
\end{algorithm}

\subsubsection{Noise Reduction}
\label{sec:refine}
Since the real-world datasets always contain a considerable amount of noisy data points, here we describe how FIRD deals with such noisy datasets by filtering the outliers.

Observations that do not belong to any possible values of the latent semantic variable are recognized as outliers, which are noise to be removed from the dataset.
Specifically, the likelihood of the observation given $d_n=g$ is
\begin{displaymath}
\begin{aligned}
p(\mathbf{x}_{n}|d_n=g)=
\prod_{m=1}^M\left\{\gamma_{ngm} + \bar{\gamma}_{ngm}\right\}.
\end{aligned}
\end{displaymath}
Given $d_n=g$, the information gain after observing $\mathbf{x}_n$ is
\begin{displaymath}
I(\mathbf{x}_n|d_n=g) = - \log p(\mathbf{x}_{n}|d_n=g).
\end{displaymath}
We expect this information to be large for outliers, so we compute a threshold on $I$.
A natural choice is its expectation, i.e. the entropy of the distribution $p(\mathbf{x}_{n}|d_n=g)$:
\begin{equation}
\label{eq:threshold}
\begin{aligned}
&H[p(\mathbf{x}_{n}|d_n=g)] = \sum_{m=1}^M H[p(x_{nm}|d_n=g)] \\
= &-\sum_{m=1}^M\sum_{i=1}^{D_m}h\left(\mu_{gm}\alpha_{gmi} + (1-\mu_{gm})\beta_{gmi}\right),
\end{aligned}
\end{equation}
where we defined $h(y) = y\log y$.
Then we can filter out the outliers that satisfy $I(\mathbf{x}_n|d_n=g) > (1+\epsilon) \cdot H[p(\mathbf{x}_n|d_n=g)]$ for all components, where $\epsilon$ is the tolerance.

\subsubsection{Inference}
\label{sec:prior}
We can incorporate our prior knowledge of $\mathbf{d}$ to make inference based on the learned probabilistic representations.
Using classification as an example, we can infer the label of each data point given the labels of $\mathbf{d}$:
\begin{equation}
\label{eq:label}
\ell_n\triangleq \mathbb{E}_{d_n}\left[\ell|\mathbf x_n\right] = \sum_{g=1}^Gp(\ell|d_n=g)p(d_n=g|\mathbf{x}_n),
\end{equation}
where $\ell$ is application-dependent target variable.
Since $G \ll N$, domain experts can efficiently analyze the learned latent representations $\mathbf{d}$ and design appropriate decision distributions $p(\ell|d_n=g)$.

In fraud detection, we have the prior knowledge that fraudsters usually synchronize with each other due to potential resource sharing~\cite{palshikar2002hidden,raj2011analysis}.
This allows us to determine $p(\ell|d_n=g)$ automatically by calculating the ``difficulty of generation'' under a random model $\mathcal{M}_{\text{random}}$.
If the observations are too difficult to generate under the random model, they are very likely to be fraudsters.
Suppose the values $X_{mi}$ have probability $\frac{1}{D_m}$ for each $\mathcal{F}_m$ in the random model $\mathcal{M}_{\text{random}}$.
Then the probability of generating $g$ is
\begin{displaymath}
    p(d_n=g|\mathcal{M}_{\text{random}}) = \prod_{n=1}^N\pi_g\prod_{m=1}^M\prod_{i=1}^{D_m}\binom{D_m}{N_{gmi}}D_m^{-N_{gmi}},
\end{displaymath}
where we have defined
\begin{displaymath}
    N_{gmi} = \sum_{n=1}^N\mathbf{1}(x_{nm}=X_{mi})\cdot p(d_n=g|\mathbf{x}_n).
\end{displaymath}
Note that $\binom{D_m}{N_{gmi}}$ can be calculated by the gamma function, and $\mathbf{1}(\cdot)$ is the indicator function.
Again, we can calculate the information and use the entropy as a threshold for fraudsters:
\begin{displaymath}
    H\left[d_n=g|\mathcal{M}_{\text{random}})\right] = \sum_{n=1}^N\sum_{m=1}^M\sum_{i=1}^{D_m}\left\{\log \binom{D_m}{N_{gmi}} - N_{gmi}\log D_m\right\}.
\end{displaymath}
Any group with $I(d_n=g|\mathcal{M}_{\text{random}})>(1+\epsilon)\cdot H\left[p(d_n=g|\mathcal{M}_{\text{random}})\right]$ is recognized as fraud, where $\epsilon$ is the tolerance.

\begin{figure*}[!tb]
\centering
    \includegraphics[width=\linewidth]{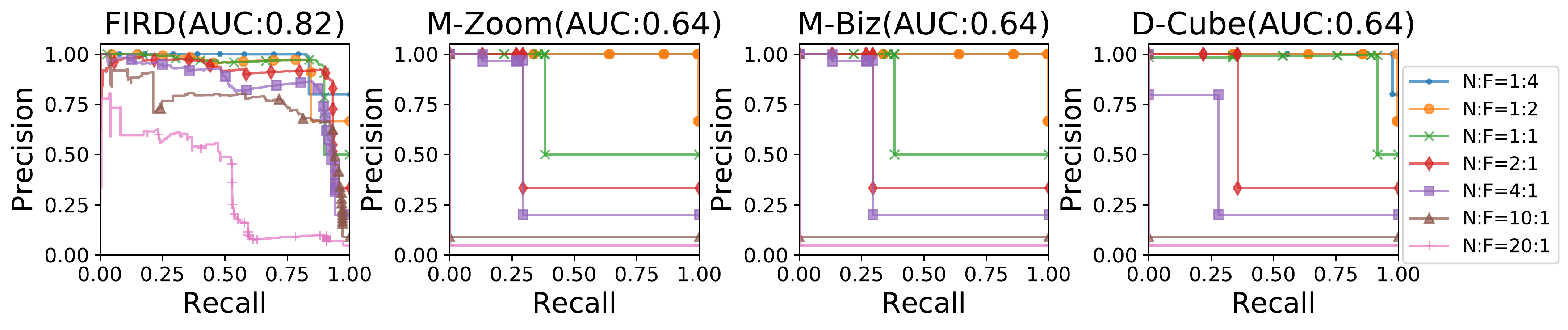}
    \caption{The precision-recall curves on the real-world E-commerce platform data. We present the comparison results with different NFR and plot precision-recall curves. For each method, we evaluate the overall performance by AUC.}
  \label{fig:precision_recall}
\end{figure*}

\subsubsection{Discussion}
\label{sec:discussion}
We first discuss the complexity of FIRD.
FIRD runs in linear time w.r.t. the feature dimension $M$ and the number of data points $N$.
In the E-step, we need to calculate the information of the latent variables, which is $O(NGM)$ for time and space complexity.
In the M-step, the time complexity bottleneck is updating $\boldsymbol{\alpha}$ and $\boldsymbol{\beta}$.
A plain implementation of the update iterations for $\boldsymbol{\alpha}$ and $\boldsymbol{\beta}$ runs in $O(NG\sum_{m}D_m)$.
However, suppose $NM > \sum_mD_m$, we can trade space for time by maintaining an array for $\boldsymbol{\alpha}_{gm}$ and $\boldsymbol{\beta}_{gm}$.
With additional $O(\max_mD_m)$ space, the time cost for updating $\boldsymbol{\alpha}$ and $\boldsymbol{\beta}$ can be reduced to $O(NGM)$.
Therefore, both the time and the space complexities of FIRD are $O(NGM)$.
When the number of samples is extremely large, a single pass of E step and M step is time-consuming.
We can further utilize stochastic EM algorithms to accelerate convergence for large datasets~\cite{nielsen2000stochastic,chen2018stochastic}.

We also provide a parameter initialization strategy.
FIRD adopts pairs of adversarial distributions. 
If one distribution prevails due to the random initialization (for example, $\mu_{gm} > 0.5$ means the sparse distribution prevails, and $\mu_{gm}=1$ is equivalent to removing $Multinomial(\boldsymbol{\beta}_{gm})$), this advantage may persist through the subsequent iterations.
Therefore, a fair start is $\pi_g = 1/G$ and $\mu_{gm}=0.5$ for all possible $g, m$.
The randomness of the model then comes from the random initialization of $\boldsymbol{\alpha}$ and $\boldsymbol{\beta}$.

The hyperparameter $\boldsymbol{\lambda}$ is hard to determine without any upper bound.
Here we discuss how to normalize its value to ease the choice of hyperparameters.
$\boldsymbol{\lambda}$ is the conjugate priors' weight parameter, which controls the sparsity in $\boldsymbol{\alpha}$ and $\boldsymbol{\pi}$ as well as the randomness in $\boldsymbol{\beta}$.
We expect such regularizers to be large enough to punish the models that violate our assumptions but are not so large that they overpower the likelihood.
Since the priors have the same form as the likelihood, the regularizers can be seen as the likelihood of fake data points.
With a given dataset, we then adjust $\boldsymbol\lambda$ so that the fake data points have a comparable size, i.e.
\begin{equation}
\begin{aligned}
    \lambda_g^{(1)} = \lambda^{(1)}\cdot\frac{N}{G},~~
    \lambda_{gmi}^{(2)} = \lambda^{(2)}\cdot\frac{N}{2GD_m}.
\end{aligned}
\end{equation}
Thus, we just need to decide the normalized regularization weights $0<\lambda^{(1)}, \lambda^{(2)}\leq 1$.

\section{Experiments}
\label{sec:experiments}
In this section, we present detailed analysis and two applications of FIRD.
We first demonstrate the application of FIRD on fraud detection.
We compare FIRD to the state-of-the-art unsupervised fraud detection methods on an E-commerce platform dataset and visualize the detection results of FIRD as critical applications like fraud detection require high interpretability.
Then we report the performance of FIRD on anomaly detection benchmark datasets to show its effectiveness as a general anomaly detection method.
Finally, we analyze how the hyperparameters of FIRD affect the performance and its running time cost on synthetic datasets.

We do not compare FIRD with supervised methods for fraud detection as they offer little practical value.
Three primary reasons are: 1) the fraud labels are expensive to collect; 2) the fraud pattern changes as the detection method evolves; and 3) theoretically, the distributions of training data and test data may differ vastly for the two reasons above, which violates the i.i.d. assumption of supervised methods.

Since FIRD models both the synchronization and randomness patterns, it displays superior performance on different tasks.
As such, in our experiments, we compare with the most promising methods on each task.
Specifically, for the fraud detection experiment, we compare with the dense block detection methods M-Zoom, M-Biz, and D-Cube~\cite{shin2016mzoom,shin2017dcube}, which heuristically search for high-density data blocks (with specially designed density definitions).
For anomaly detection methods, we compare FIRD to the state-of-the-art methods such as the histogram-based outlier score (HBOS) \cite{goldstein2012histogram}, the isolation forests (IForest) \cite{liu2008isolation}, the one-class SVM (OCSVM) \cite{dufrenois2016one} and the clustering-based local outlier factor (CBLOF) \cite{he2003discovering}.
For all these experiments, we adopt a {\tt Cython} implementation of FIRD\footnote{A {\tt Cython} implementation is available at https://github.com/fingertap/fird.cython.}.

\subsection{Identify Fraudsters in E-commerce Platform}
\label{sec:shopping}
In this experiment we first describe the experiment setups.
Then we present the comparison result of FIRD with state-of-the-art fraud detection methods on an E-commerce platform dataset.
Finally, we visualize the probabilistic representations learned by FIRD since models are expected to make interpretable decisions in critical applications like fraud detection.

\subsubsection{Experiment Setup}
Every day, a massive number of new users register for E-commerce platforms, a considerable number of which are fraudsters.
We obtained a dataset with over 20,000 fraudsters and a sufficient number of normal user samples with 30 useful features from an E-commerce platform.
For some features, there are tens of thousands of possible values.
The platform labeled the records according to the account behavior in the following few months.
As previously described, the fraudsters form different clusters by sharing different sets of feature values.
In contrast, normal users seldom share feature values and are randomly distributed.

We randomly sample 7 sets of normal users with different sizes and mix them with the fraudsters to synthesize seven datasets with different normal-user-to-fraudster ratios (NFR, ranging from 1:4 to 20:1).
These datasets have the same fraud patterns, and the main difference lies in the noise levels.
We apply FIRD on these datasets and compare its performance to the existing dense block detection methods M-Zoom, M-Biz~\cite{shin2016mzoom}, and D-Cube~\cite{shin2017dcube}.
We use these methods to predict the identities of the data points and evaluate the precision and recall.
We decide $p(\ell|d_n=g)$ for FIRD according to the method described in section~\ref{sec:prior}.
For the dense block detection methods, we assign $p(\ell|d_n=g)=1$ if the proportion of the fraudsters in the detected block exceeds $50\%$, and assign the labels of the detected blocks to their members.
FIRD labels the outliers filtered out according to Section \ref{sec:refine} as normal.

\subsubsection{Overall Comparison Results}
\label{sec:fraud_comparison}

We display the comparison results in \figurename~\ref{fig:precision_recall}.
Note that for the dense block methods, we report the best result under four different density measures.
We plot the precision-recall curve and present the mean area under curve (mean AUC) score for all four methods on these seven datasets.
We observe that, for dense block detection methods, there is a precision decline as the NFR increases.
The AUC score of these methods decreases rapidly as the NFR increases (from $0.99$ to $0.1$).
For comparison, we observe that FIRD is relatively robust to the NFR (AUC ranges from $0.97$ to $0.46$, with an average of $0.82$).
These improvements can be attributed to the refining process, which deletes most of the noisy data points.

As critical applications like fraud detection expect the model to provide interpretable results, we visualize FIRD's detection results with the following concrete example.

\subsubsection{Interpretable Results}
In fraud detection applications, the model is expected to provide interpretable results.
As such, we visualize the probabilistic representations learned by FIRD to show how it captures the fraud patterns.
We first plot the distribution of fraudsters and normal users after detection.
Then we investigate one specific case, i.e., $d_n=1$, to show FIRD captures the fraud pattern.
Finally, we demonstrate the IP address distribution given $d_n=1$ to show that FIRD captures the synchronizations in both fraudsters and some (potentially malicious) normal users.

\begin{figure}[!th]
\centering
\includegraphics[width=\linewidth]{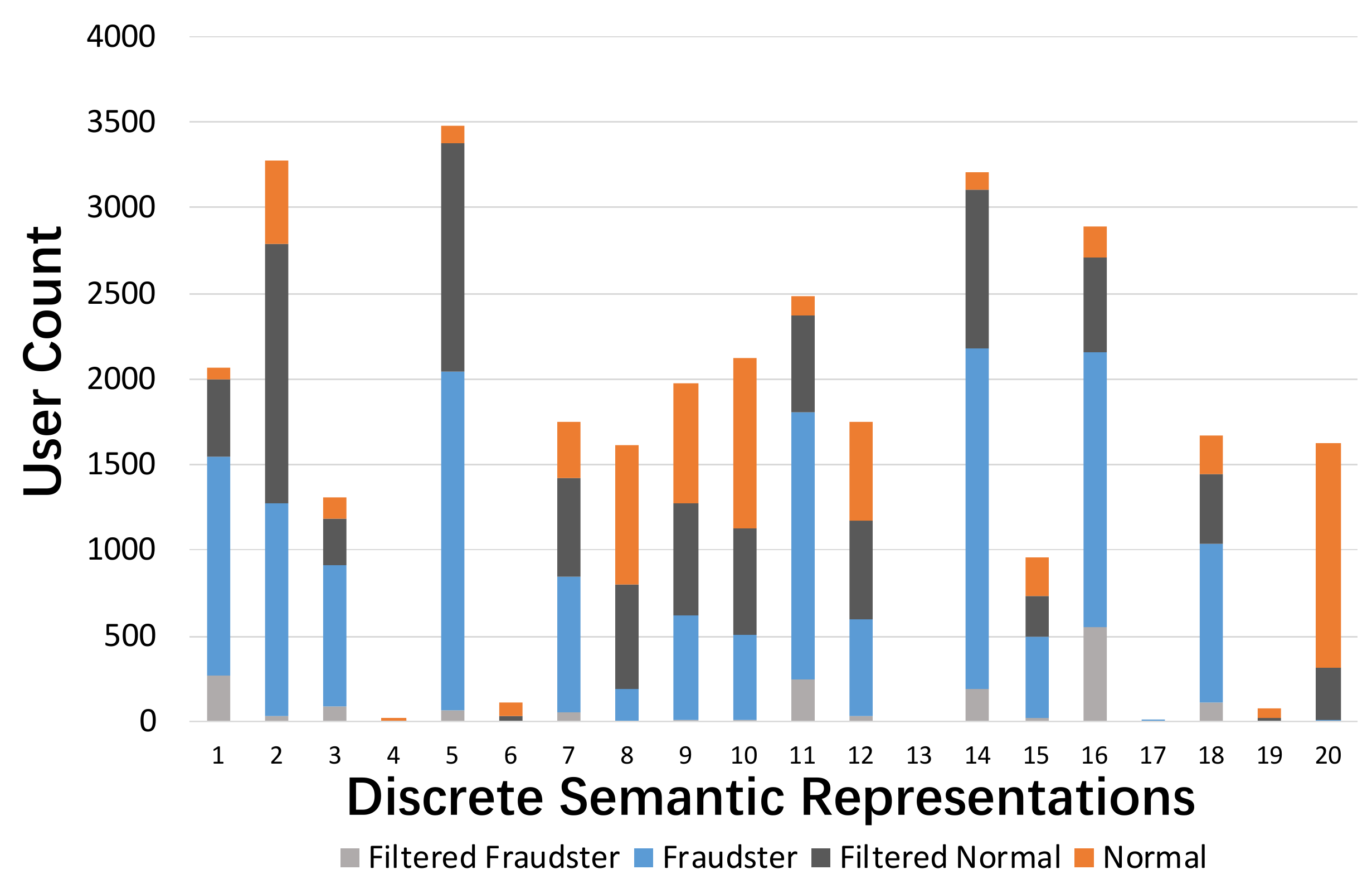}
\caption{X-axis shows 20 bars and each bar is a specific value of the semantic representation (i.e. $d_n=1, \dots, 20$). Y-axis indicates user count and orange, blue and gray blocks respectively illustrate normal users, fraudsters and outliers. Some normal users are not filtered out as \figurename~\ref{fig:res_alpha} described.}
\label{fig:res_mixture_compoenents}
\end{figure}

\figurename~\ref{fig:res_mixture_compoenents} displays the user distribution after detection.
Every data point is assigned according to $\max_g\widetilde{\phi}_{ng}$.
Each bar represents a possible value for $d_n$, and its height indicates the user count.
We color the bars according to the ground truth of each data point, and we discuss the bars from top to bottom.
The top layer (orange blocks) comprises normal users, and the third layer (blue blocks) comprises fraudsters.
We filter out the outliers according to Section \ref{sec:refine} and group them into the second and fourth layers (gray blocks).
We can see that most of the normal users are filtered out of the fraudulent groups (e.g., group 1, 2, and 5).
However, there are still many normal users in these bars. 
As such, for our next step, we investigate one of these bars. 

\begin{figure}[!tb]
\centering
\includegraphics[width=\linewidth]{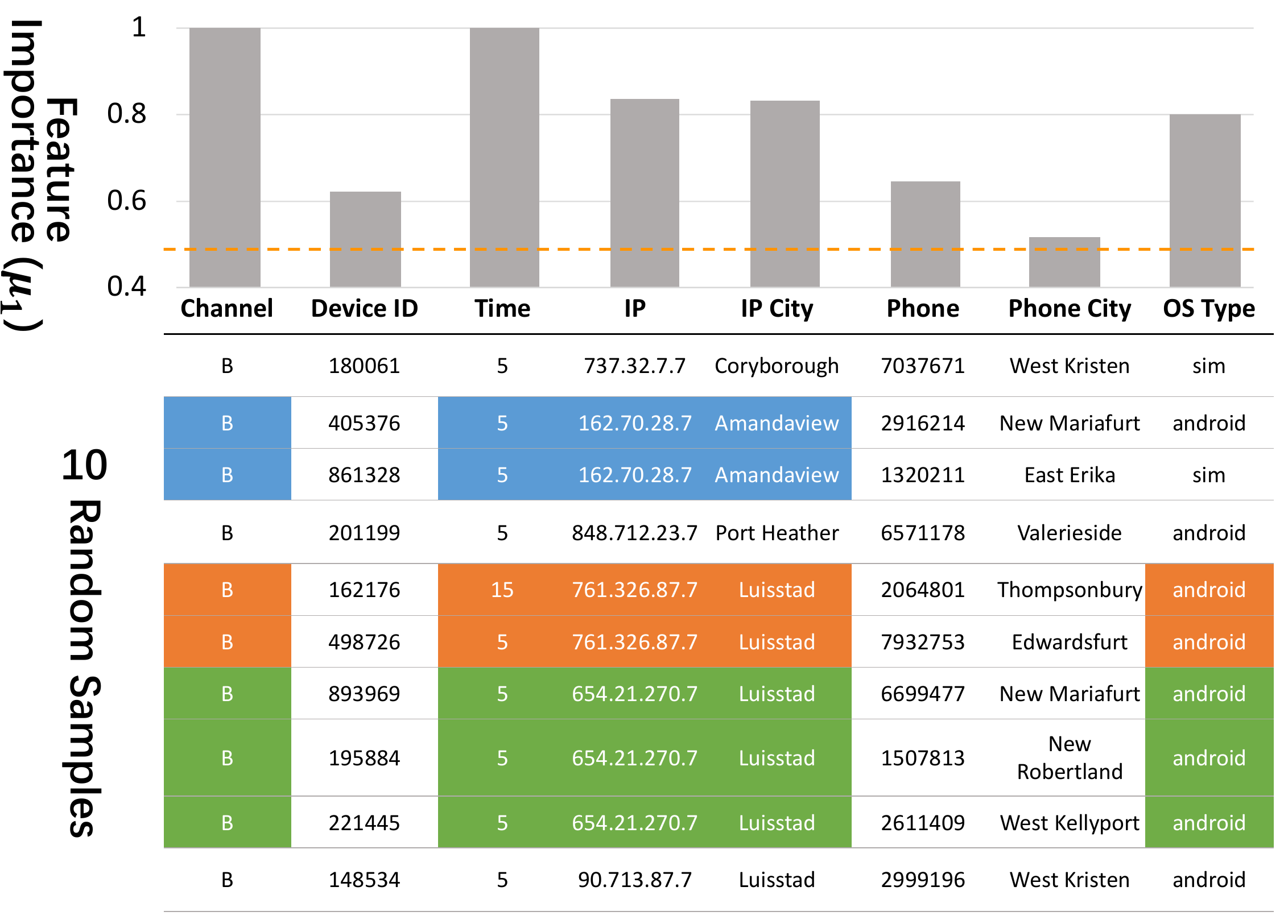}
\caption{Visualization of the representation $d_n=1$ from \figurename~\ref{fig:res_mixture_compoenents}. We randomly sample 10 data points with $d_n=1$ and plot the feature importance (i.e. parameter $\boldsymbol\mu$, as it indicates how synchronized the feature is).
The dashed line $\mu_{1,\cdot}=0.5$ indicates neither the sparsity and randomness patterns overpower the other.
We highlight the synchronized behaviors of the fraudsters in these samples with different background colors.
Note that the three fraud groups (three different background color) share the same synchronized features, which are important features suggested by high $\boldsymbol{\mu}$ values (much greater than 0.5).}
\label{fig:res_records}
\end{figure}

We randomly sample 10 data points from the first bar (i.e. setting $d_n=1$), and plot the learned feature importance parameter ($\boldsymbol\mu_1$) in \figurename~\ref{fig:res_records}.
We can observe the overt synchronized behaviors of these records, which are highlighted as colored blocks.
The parameter $\mu_{1,\cdot}$ for these features is relatively high (over 0.8), which indicates that these features are highly synchronized.
We can see that in the {\it channel} and {\it time} features, the parameter is almost 1, as nearly all of the records with $d_n=1$ have nearly the same value.
The $\mu_{1,IP}$ value for feature {\it IP} is a little smaller than 1 because there are several popular IP addresses.
In features like {\it device ID}, {\it phone} and {\it phone city}, the data points are less synchronized.
As a result, the parameter $\mu_{1, \cdot}$ for these features is closer to 0.5, which signifies that the degrees of synchronization and randomness are similar.

\begin{figure}[!th]
\centering
\includegraphics[width=\linewidth]{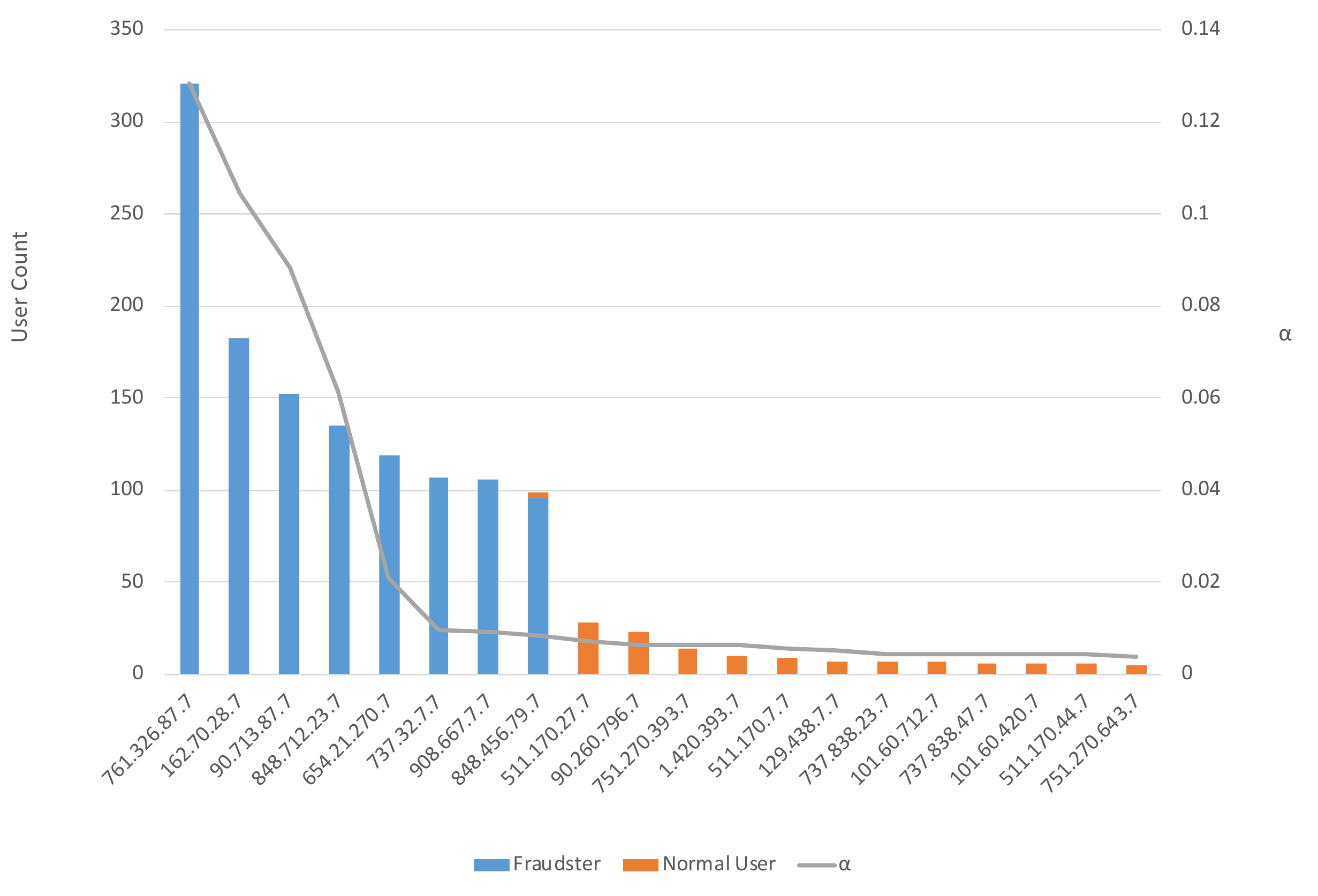}
\caption{Investigate Ip address distribution when setting $d_n=1$. X-axis represents IP address and sorted w.r.t. the parameter $\boldsymbol{\alpha}$. Y-axis is the user count and we color bars according to ground truth and many IP addresses shared by groups of normal users.}
\label{fig:res_alpha}    
\end{figure}

We further investigate the IP feature with $d_n=1$ to see the probability mass of those possible IP addresses ($\alpha_{1, IP,\cdot}$) as well as the number of data points that use these IP addresses.
The result is plotted in \figurename~\ref{fig:res_alpha}.
The bars are again colored according to the ground truth labels of the data points to demonstrate user identities.
We can see that the sampled IPs in \figurename~\ref{fig:res_records} have high probability mass, which means the model correctly captures these clustering patterns.
Interestingly, we find some small groups of normal users that share the same IP addresses.
This explains why some normal users remain in the bars in \figurename~\ref{fig:res_mixture_compoenents}.
The largest group consists of 28 users that share the IP address {\it 511.170.27.7}.
Such synchronization on a single IP address is suspicious, and we should pay extra attention to these groups in practice.
Note that $\boldsymbol{\alpha}_{1,IP}$ decreases faster than the bars since the bars correspond to the likelihood in the objective function, and the regularizers force $\boldsymbol{\alpha}_{1,IP}$ to be sparser than the likelihood.
Together with the observations in \figurename~\ref{fig:res_records}, we verify that FIRD can assign different fraudulent groups the same latent representation as long as the synchronized features of these groups are identical.

In a nutshell, this experiment shows that FIRD can detect reasonable fraud patterns and automatically infer the labels of each data point based on our prior knowledge.
In addition to significant detection results, the probability representations learned by FIRD can also assist us in analyzing the fraud patterns.

\subsection{Results on Anomaly Detection Benchmarks}
\label{sec:compare_outlier}
FIRD can also serve as a general anomaly detection method by assigning $p(\ell|d_n=g)=1$ for anomalies and $p(\ell|d_n=g)=0$ for nominal data points in Eq. \eqref{eq:label}.
Here we demonstrate the detection results of FIRD and comparison methods on benchmark anomaly detection datasets.
We first describe the setup and then provide comparison results and analysis.

\subsubsection{Experiment Setup}
ODDS~\cite{dataset:odds} organizes the benchmark datasets used in related works.
These datasets come from different domains and are readily labeled by reorganizing some multi-class datasets, where the smallest class is chosen to be the anomaly~\cite{abe2006outlier,liu2008isolation,keller2012hics,zimek2013subsampling,aggarwal2015theoretical}.
We use the ROC-AUC score to evaluate the performance of the methods, and compares FIRD with the most promising methods\footnote{More comparison methods: https://pyod.readthedocs.io/en/latest/benchmark.html}, namely the histogram-based outlier score (HBOS) \cite{goldstein2012histogram}, the isolation forests (IForest) \cite{liu2008isolation}, the one-class SVM (OCSVM) \cite{dufrenois2016one} and locally selective
combination in parallel outlier ensembles (LSCP) \cite{zhao2019lscp}.

\subsubsection{Comparison results}

\begin{table}[!tb]
\centering
\begin{tabular}{lccccc}
\toprule
       Dataset &   FIRD &   HBOS &  IForest &  OCSVM   & LSCP\\
\midrule
cardio     &   {\bf 0.949} &  0.843.      &    0.924       &  0.938      &  0.901 \\
musk       &   {\bf 1.000} &  {\bf 1.000} &    0.999       &  {\bf 1.000} &  0.998 \\
optdigits  &   {\bf 1.000} &  0.865       &    0.714       &  0.500       &  -\\
satimage-2 &   {\bf 0.998} &  0.977       &    0.993       &  0.997       &  0.9935\\
shuttle    &   0.990       &  0.986       &    {\bf 0.997} &  0.992       &  0.5514\\
satellite  &   {\bf 0.900} &  0.754       &    0.701       &  0.660       &  0.6015\\
ionosphere  &   {\bf 0.946} &  0.5569       &    0.8529       &  0.8597    &  -\\
pendigits & {\bf 0.972} & 0.9247 & 0.9435 & 0.931 & 0.8744 \\
wbc & 0.944 & {\bf 0.954} & 0.9325 & 0.9376 & 0.945 \\
\bottomrule
\end{tabular}
\caption{Results on anomaly detection benchmark datasets.
We evaluate the performance of identifying fraud users by ROC-AUC scores.
The datasets {\it cardio}, {\it musk} and {\it optdigits} come from~\cite{aggarwal2015theoretical}.
The datasets {\it satellite} and {\it ionosphere} come from~\cite{liu2008isolation}.
The datasets {\it wbc} and {\it pendigits} come from~\cite{keller2012hics}.
The dataset {\it satimage-2} comes from~\cite{zimek2013subsampling}, and {\it shuttle} comes from~\cite{abe2006outlier}.
}
\label{tab:outlier_detection}
\end{table}

We demonstrate the comparison results in Table \ref{tab:outlier_detection}.
FIRD displays competitive performance in most of these benchmarks, which implies that modeling both the synchronization and randomness benefits the detection.
The state-of-the-art anomaly detection methods usually explore the heterogeneous statistical patterns of different features by resampling.
They construct a sequence of datasets by resampling the features as well as data points and learn an independent outlier detector on each resampled dataset~\cite{aggarwal2013outlier}.
They expect that among these samples, the learner may luckily drop the non-informative features and suppress the fraction of outliers so that the detection performance obtained from the ensemble of these detectors is superior to that of learning one detector on the entire dataset.

However, such a sampling strategy consumes more computational resources since it requires training many independent detectors.
The resampling cost to capture the correct feature subset also proliferates with the number of dimensions due to the exponentially many feature combinations.
Besides, the sampling strategy restricts the learner to model the local feature patterns explicitly.
In some features, only a small subset of the dataset display interesting patterns.
If the learner discards these features, we cannot recognize these local patterns.
These data points then become noise, which will further reduce performance.
In comparison, FIRD jointly models the randomness and synchronization to provide better recognition results with a less computational cost.

\subsection{Model Analysis}
\label{sec:hyper_parameter}

In this section, we analyze the effectiveness of FIRD on synchronized datasets, generated following the process in section~\ref{sec:fird}.
We first analyze the dimension $G$ of the semantic variable $\mathbf{d}$.
Then we analyze the weights $\boldsymbol{\lambda}$ of the regularizers.
We finally show the running time of FIRD as the feature dimension $M$ increases.
For parameter analysis, we set $N=20000, G=20, M=20$, and $D_m=200$ for all features.
For running time analysis, we use $N=20000, G=10, D_m=30$ and $M$ ranging from $10$ to $100$.

Since the task is similar to clustering, we apply FIRD, KMeans, and spectral clustering to the synchronized dataset to recover $\mathbf{d}$.
The spectral clustering method~\cite{ng2002spectral} is a strong baseline for clustering performance, and Kmeans~\cite{kanungo2002efficient} is a fast baseline for clustering speed.
We evaluate the clustering performance by three metrics:
\begin{itemize}
  \item {\it Homogeneity} quantifies the pureness of the detected clusters. A high homogeneity score indicates the members of the cluster have almost the same $d_n$.
  \item A high {\it completeness} score means almost all the data points with $d_n=g$ are assigned to the same cluster.
  \item {\it V-score} is the harmonic mean between homogeneity and completeness.
\end{itemize}
All of the three metrics range from 0 to 1, larger values being desirable.
A trivial strategy to achieve the best homogeneity score is to assign each data point as a cluster, while the completeness score will be 0.
We can similarly achieve optimal completeness by assigning all data points to a single cluster, at the cost of zero homogeneity.
Therefore, a good model is expected to optimize the three metrics simultaneously.

\begin{figure}[!tb]
  \centering
  \includegraphics[width=\linewidth]{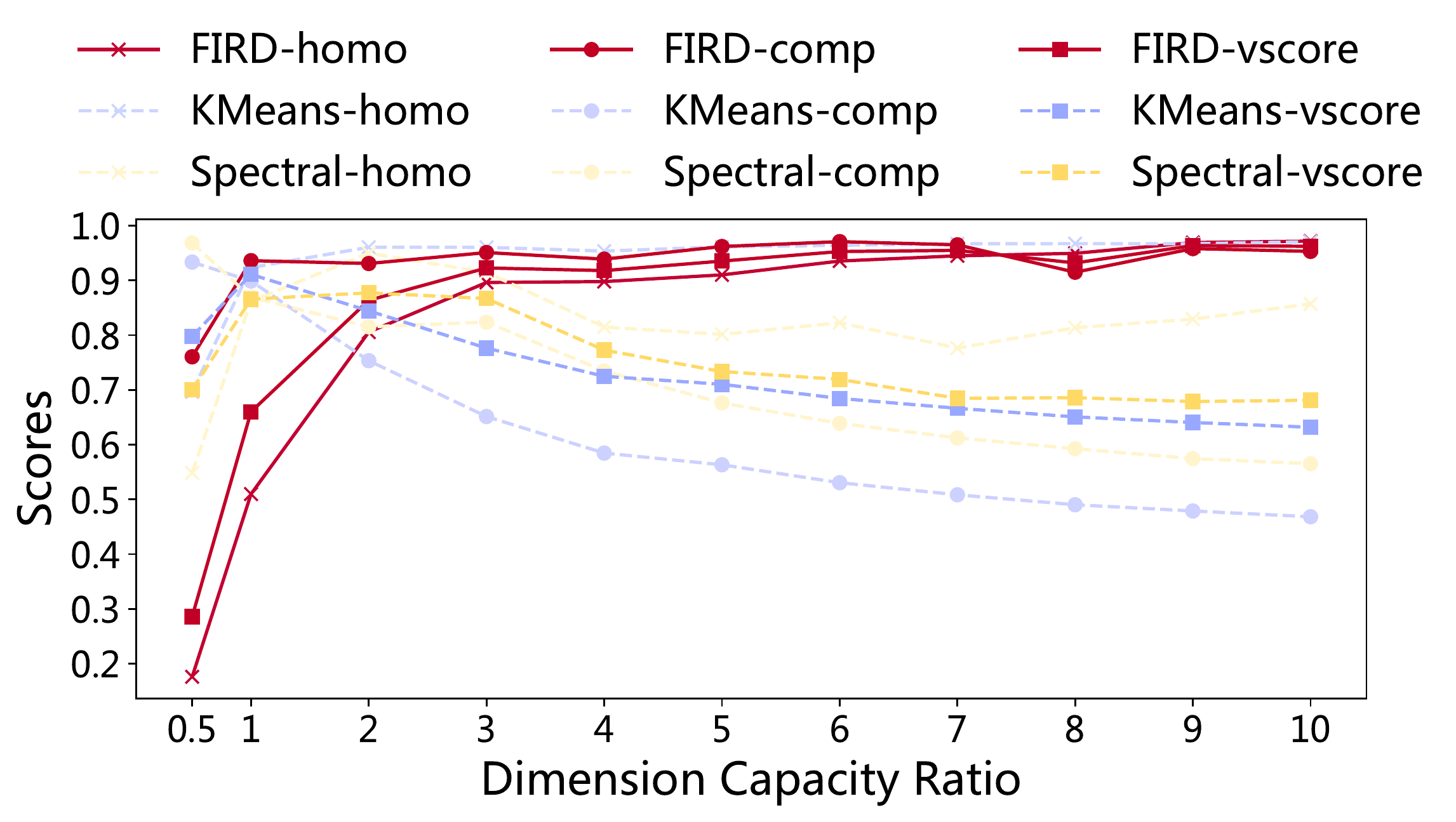}
  \caption{
      Performance under different dimension capacity ratio (DCR).
      DCR is calculated by $G/G_{true}$ for FIRD and $K/G_{true}$ for other two methods, where $G_{true}$ is the ground truth latent dimension $G$ used for data generation.
  }
  \label{fig:diff_G}
\end{figure}

\subsubsection{Different Choice of $G$}

We first study the different choices of the dimension $G$ of the latent discrete space.
The parameter $G$ controls the dimension of latent discrete space of $\mathbf{d}$, which is similar to the $K$ of the KMeans and spectral clustering algorithm.
For normalization, we demonstrate the result with different {\it dimension capacity ratio} (DCR) defined as the ratio of the dimension $G$ to the ground truth dimension $G_{true}$, so the results are independent to $G_{true}$.
We synchronize the dataset according to the generation process described in section~\ref{sec:fird} with $G_{true}=10$.
We then adopt $G=5, 10, 20, \dots, 100$ for FIRD, and the same for $K$ in KMeans and spectral clustering method to obtain DCR ranging from $0.5$ to $10$.
Note that for KMeans and spectral clustering, we calculate the distance in the discrete space by applying a one-hot encoding preprocessing step to the vectors.
The detection performance is demonstrated in \figurename~\ref{fig:diff_G}.
It indicates that when the patterns are locally different for the clusters, conventional methods such as KMeans and spectral clustering require a perfect guess of $G_{true}$, i.e., DCR = 1, to achieve relatively high performance.
As DCR increases, conventional methods tend to split the clusters into smaller ones due to the random noise in non-informative dimensions.
In contrast, FIRD achieves high scores under all three metrics as long as the dimension capacity ratio is large enough, e.g., $G/G_{true} > 2$.
As described in Section~\ref{sec:fird}, FIRD automatically determines an appropriate $G$ through the sparsity of $\boldsymbol{\pi}$, so increasing the DCR does not affect its effectiveness.

\begin{figure}[!t]
\centering
  \includegraphics[width=\linewidth]{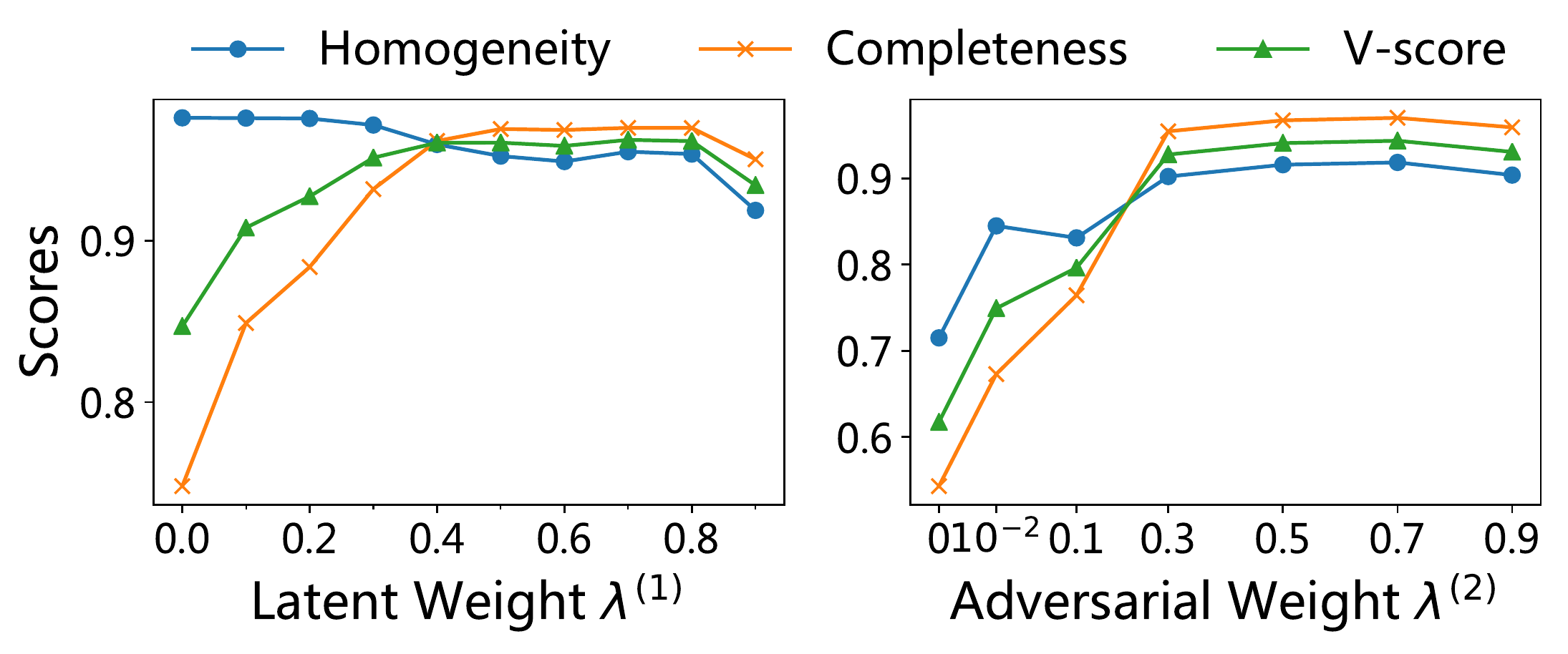}
  \caption{
      Performance under different regularization weights.
      FIRD is robust to the choice of the weights as long as the weights are not too small to cancel the effect of regularization.
      }
  \label{fig:different_lamb}
\end{figure}

\subsubsection{Different Choice of $\lambda$}

We introduced the normalized regularization weights $\boldsymbol{\lambda}^{(1)}$ and $\boldsymbol{\lambda}^{(2)}$ in section~\ref{sec:fird} and~\ref{sec:discussion} to control the degree of sparsity or randomness in the parameters.
The sparsity in $\boldsymbol{\pi}$ controlled by $\boldsymbol{\lambda}^{(1)}$ enables the automatic determination of the appropriate $G$, and $\boldsymbol{\lambda}^{(2)}$ reflects the difference between the adversarial distributions.
We apply FIRD with different $\boldsymbol{\lambda}$s to the synchronized dataset to study the effect, with results shown in~\figurename~\ref{fig:different_lamb}.
We find that FIRD is robust to the choice of both $\boldsymbol{\lambda}^{(1)}$ and $\boldsymbol{\lambda}^{(2)}$ except when the weights are too small so that the learner no longer enjoys the modeling ability of the adversarial distributions.

\begin{figure}[!tb]
\centering
  \includegraphics[width=\linewidth]{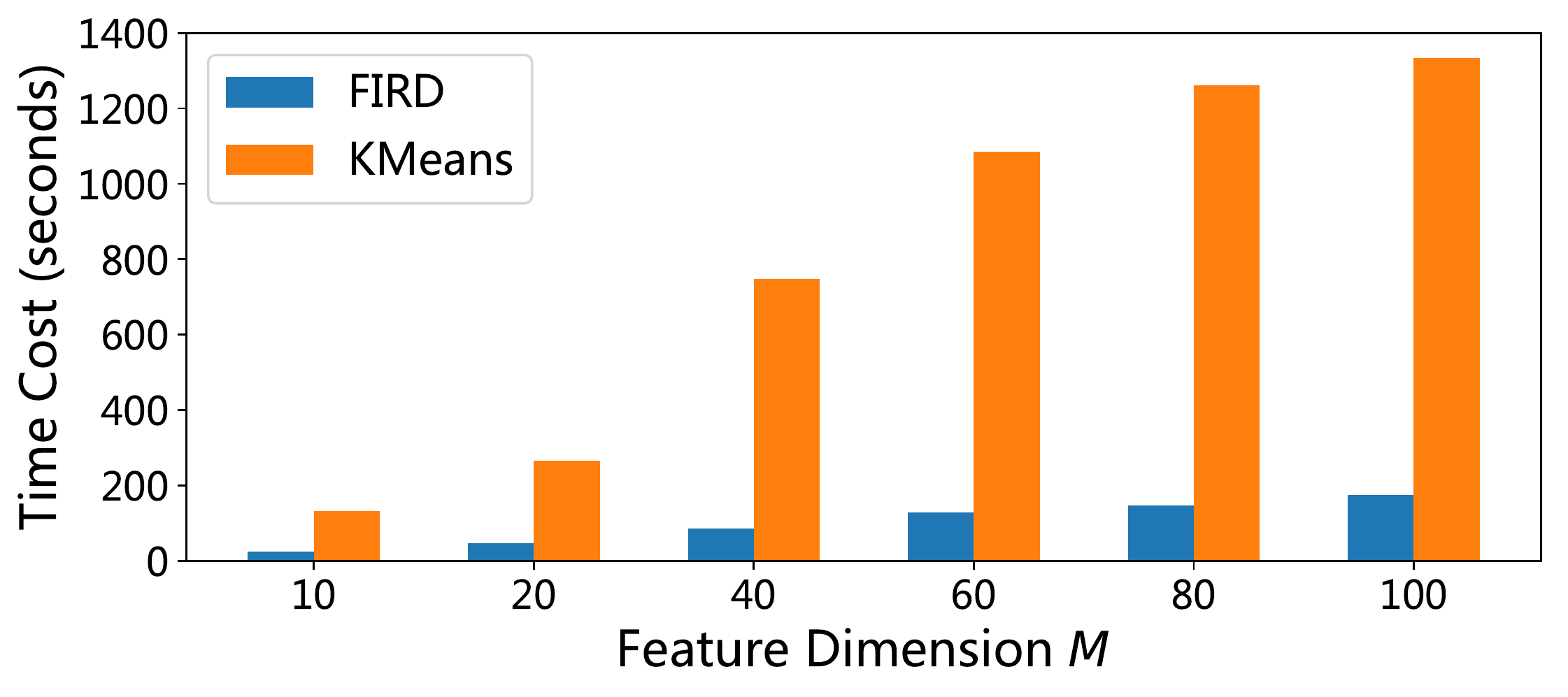}
  \caption{
  Running time of FIRD and KMeans as the feature dimension $M$ increases.
  We implement FIRD with {\tt Cython} and adopt KMeans implementation in the well-known Python package  {\tt scikit-learn}~\cite{scikit-learn}.
  }
  \label{fig:running_time}
\end{figure}

\subsubsection{Running Time Analysis}

FIRD has linear computational cost w.r.t. the dataset scale, especially in high-dimensional spaces.
We demonstrate the running time comparison of FIRD and KMeans in \figurename~\ref{fig:running_time}.
Here we adopt KMeans implementation in the well-known Python package  \texttt{scikit-learn} \cite{scikit-learn}.
Since KMeans calculates the distance between the one-hot encoded vectors, the computational cost increases exponentially with the feature dimension $M$.
In contrast, FIRD decouples the features by the independence assumption, so it enjoys a linear growth in running time as $M$ increases.


\section{Conclusion}
\label{sec:conclusion}
In this paper, we propose a novel unsupervised generative learning framework called FIRD to model heterogeneous statistical patterns in unlabeled datasets.
FIRD utilizes the adversarial distributions with priors to capture such patterns.
In discrete spaces, FIRD captures the synchronization and randomness patterns, which turns out quite useful on both fraud detection and general anomaly detection applications.
The significant results on various datasets verify that modeling heterogeneous statistical patterns provides more generalizable representations and benefits various downstream applications.
As future work, we expect FIRD to be effective in other applications that model the patters other than the synchronization and randomness by adopting appropriate adversarial distributions.

\section{Acknowledgments}
This research was supported by Alibaba Group through Alibaba Innovative Research Program.
\appendix

\section{Proof of Theorem~\ref{thm:unique_optimizer}}
\label{sec:theorem_proof}
As the ELBO is concave w.r.t. all variables, it can be easily shown that the solution in Theorem~\ref{thm:unique_optimizer} is optimal.
To show the uniqueness, we prove that each adversarial component $p_k(x_{nm}|d_n, \theta)$ will fit the corresponding pattern $p_{k'}^\star(x_{nm}|d_n)$.
According to the definition 
\begin{displaymath}
    D_{KL}\left(p^\star_{k'}(x_{nm}|d_n)\middle\Vert p_k(x_{nm}|d_n, \theta)\right) = \mathbb{E}_{x\sim p^\star_{k'}}[\log p_k(x_{nm}|d_n, \theta)] + H,
\end{displaymath}
where $H$ is the entropy of $p^\star_{k'}(x_{nm}|d_n)$, the KL divergence assumption in Theorem~\ref{thm:unique_optimizer} indicates for all possible $d_n$ and $\forall j\neq k$,
\begin{equation}
\label{eq:likelihood_inequality}
\begin{aligned}
\mathbb{E}_{x\sim p^\star_{k'}}[p_k(x_{nm}|d_n, \theta)] > \mathbb{E}_{x\sim p^\star_{k'}}[p_j(x_{nm}|d_n, \theta)].
\end{aligned}
\end{equation}
Since $\tilde{\mu}_{nmk} > 0$, multiplying Eq.~\eqref{eq:likelihood_inequality} with $q(d_n)\tilde{\mu}_{nmk}$ and summing over $m, k$ and $d_n$ gives
\begin{displaymath}
\begin{aligned}
&\lim_{N\rightarrow\infty}\frac{1}{N}ELBO(p_{k}\rightarrow p_{k'}^\star) \\
=&\lim_{N\rightarrow\infty}\frac{1}{N}\sum_{n, d_n, m, k}q(d_n)\tilde{\mu}_{nmk}\log p_k(x_{nm}|d_n, \theta) + const\\
\geq&\lim_{N\rightarrow\infty}\frac{1}{N}\sum_{n, d_n, m, k}q(d_n)\tilde{\mu}_{nmk}\log p_j(x_{nm}|d_n, \theta) + const\\
=&\lim_{N\rightarrow\infty}\frac{1}{N}ELBO(p_{j}\rightarrow p_{k'}^\star)
\end{aligned}
\end{displaymath}
which indicates that the optimal solution use $p_k(x_{nm}|d_n, \theta)$ to approximate the corresponding pattern $p_{k'}^\star(x_{nm}|d_n)$.
Using the EM algorithm gives the estimation of other parameters in Theorem~\ref{thm:unique_optimizer}, which completes the proof.

\bibliographystyle{ACM-Reference-Format}
\bibliography{dblp}
\end{document}